\newtheorem{theorem}{Theorem}[section]
\newtheorem{lemma}[theorem]{Lemma}
\newtheorem{definition}[theorem]{Definition}
\title{Bandlimiting Neural Networks Against \\ Adversarial Attacks}
\author{%
  Yuping Lin \hspace{0.2cm} Kasra Ahmadi K. A. \hspace{0.2cm} Hui Jiang\\
    iFLYTEK Laboratory for Neural Computing and Machine Learning (iNCML) \\
    Department of Electrical Engineering and Computer Science  \\
  York University,  4700 Keele Street, Toronto, Ontario, M3J 1P3, Canada \\     
  \texttt{\{yuping,kasraah,hj\}@eecs.yorku.ca} \\
 %
}
\begin{document}

\maketitle

\begin{abstract}
  In this paper, we study the adversarial attack and defence problem in deep learning from the perspective of Fourier analysis. We first explicitly compute the Fourier transform of deep ReLU neural networks and show that there exist decaying but non-zero high frequency components in the Fourier spectrum of neural networks. We demonstrate that the vulnerability of neural networks towards adversarial samples can be attributed to these insignificant but non-zero high frequency components. Based on this analysis, we propose to use a simple {\em post-averaging} technique to smooth out these high frequency components to improve the robustness of neural networks against adversarial attacks. Experimental results on the ImageNet dataset have shown that our proposed method is universally effective to defend many existing adversarial attacking methods proposed in the literature, including FGSM, PGD, DeepFool and C\&W attacks. Our post-averaging method is simple since it does not require any re-training, and meanwhile it can successfully defend over 95\% of the adversarial samples generated by these methods without introducing any significant performance degradation (less than 1\%) on the original clean images. 
\end{abstract}

\section{Introduction}
Although deep neural networks (DNN) have shown to be powerful in many machine learning tasks, researchers~\citep{talk_adv} found that they are vulnerable to \textit{adversarial samples}. Adversarial samples are subtly altered inputs that can fool the trained model to produce erroneous outputs. They are more commonly seen in image classification task and typically the perturbations to the original images are so small that they are imperceptible to human eye.

Research in adversarial attacks and defences is highly active in recent years. In the attack side, many attacking methods have been proposed~\citep{talk_adv, FGSM, JSMA, PracticalAttack, DeepFool, BIM, PGD, CW, ZOO, genAttack, BoundaryAttack}, with various ways to generate effective adversarial samples to circumvent new proposed defence methods. However, since different attacks may be more effective to different defences or datasets, there is no consensus on which attack is the strongest. Hence for the sake of simplicity, in this work, we will evaluate our proposed defence approach against four popular and relatively strong attacks for empirical analysis. In the defence side, various defence mechanisms have also been proposed, including adversarial training~\citep{advTrain_1, BIM, advTrain_2, PGD}, network distillation~\citep{distillationNet}, gradient masking~\citep{gradMask}, adversarial detection~\citep{advDetect} and adding modifications to neural networks~\citep{randomLayer}. Nonetheless, many of them were quickly defeated by new types of attacks~\citep{defeated_1, defeated_2, defeated_3, CW, defeated_4, defeated_5, genAttack}. \citet{PGD} tried to provide a theoretical security guarantee for adversarial training by a min-max loss formulation, but the difficulties in non-convex optimization and finding the ultimate adversarial samples for training may loosen this robustness guarantee. As a result, so far there is no defence that is universally robust to all adversarial attacks.

Along the line of researches, there were also investigations into the properties and existence of adversarial samples. \citet{talk_adv} first observed the transferability of adversarial samples across models trained with different hyper-parameters and across different training sets. They also attributed the adversarial samples to the low-probability blind spots in the manifold. In~\citep{FGSM}, the authors explained adversarial samples as "a result of models being too linear, rather than too nonlinear." In a later paper, \citet{advTransfer} showed the transferability occurs across models with different structures and even different machine learning techniques in addition to neural networks. In summary, the general existence and transferability of adversarial samples are well known but the reason of adversarial vulnerability still needs further investigation.

In general, the observation that some small imperceptible perturbations in the inputs of neural networks lead to huge unexpected fluctuations in outputs must correspond to high frequency components in the Fourier spectrum of neutral networks. In this paper, we will start with the Fourier analysis of neural networks and elucidate why there always exist some decaying but nonzero high frequency response components in neural networks. Based on this analysis, we show neural networks are inherently vulnerable to adversarial samples due to the underlying model structure and why simple parameter regularization fails to solve this problem. Next, we propose a simple {\em post-averaging} method to tackle this problem. Our proposed method is fairly simple since it works as a post-processing stage of any given neural network models and it does not require to re-train neural networks at all. Furthermore, we have evaluated the post-averaging method against four popular adversarial attacking methods and our method is shown to be universally effective in defending all examined attacks. Experimental results on the ImageNet dataset have shown that our simple post-averaging method can successfully defend over 95\% of adversarial samples generated by these attacks with little performance degradation (less than 1\%) on the original clean images.

\section{Fourier analysis of neural networks}
\label{SEC:Fourier}

In order to understand the behaviour of adversarial samples, it is essential to find the Fourier transform of neural networks. Fortunately, for some widely used neural networks, namely fully-connected neural networks using ReLU activation functions, we may explicitly derive their Fourier transform under some minor conditions. As we will show, these theoretical results will shed light on how adversarial samples happen in neural networks. 

\subsection{Fourier transform of fully-connected ReLU neural networks}

As we know, any fully-connected ReLU neural networks (prior to the softmax layer)  essentially form piece-wise linear functions in input space. 
Due to space limit, we will only present the main results in this section and the proofs and more details may be found in Appendix. 

\begin{definition}
    A piece-wise linear function is a continuous function $f:\mathbb{R}^n \xrightarrow{} \mathbb{R}$ such that there are some hyperplanes passing through origin and dividing $\mathbb{R}^n$ into $M$ pairwise disjoint regions $\mathfrak{R}_m$, $(m=1,2,...,M)$, on each of which $f$ is linear:
\[ 
        f(\mathbf{x})=\left\{\begin{array}{ll}
            \mathbf{w}_1 \cdot \mathbf{x} & \mathbf{x}\in\mathfrak{R}_1 \\
            \mathbf{w}_2 \cdot \mathbf{x} & \mathbf{x}\in\mathfrak{R}_2 \\
            \vdots\\
            \mathbf{w}_M \cdot \mathbf{x} & \mathbf{x}\in\mathfrak{R}_M
            \end{array}\right.
\]        
\end{definition}
\begin{lemma}
Composition of a piece-wise linear function with a ReLU activation function is also a piece-wise linear function.
\end{lemma}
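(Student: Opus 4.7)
The plan is to show that applying ReLU to a piece-wise linear function $f$ produces another function of exactly the form required by the definition: continuous, linear on pairwise disjoint regions, with region boundaries formed by hyperplanes through the origin.

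First, I would start from the given partition $\{\mathfrak{R}_m\}_{m=1}^M$ on which $f(\mathbf{x}) = \mathbf{w}_m \cdot \mathbf{x}$. On each region, the composition becomes $\mathrm{ReLU}(f(\mathbf{x})) = \max(0, \mathbf{w}_m \cdot \mathbf{x})$. The key observation is that the sign of $\mathbf{w}_m \cdot \mathbf{x}$ is determined by the hyperplane $H_m = \{\mathbf{x} : \mathbf{w}_m \cdot \mathbf{x} = 0\}$, which automatically passes through the origin. Thus each region $\mathfrak{R}_m$ is naturally split into at most two subregions $\mathfrak{R}_m^+ = \mathfrak{R}_m \cap \{\mathbf{w}_m \cdot \mathbf{x} > 0\}$ and $\mathfrak{R}_m^- = \mathfrak{R}_m \cap \{\mathbf{w}_m \cdot \mathbf{x} \le 0\}$ (with the degenerate cases $\mathbf{w}_m = 0$ or $H_m$ missing $\mathfrak{R}_m$ entirely collapsing this to a single subregion). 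On $\mathfrak{R}_m^+$ the composition equals the linear function $\mathbf{w}_m \cdot \mathbf{x}$, and on $\mathfrak{R}_m^-$ it equals the linear function $0 \cdot \mathbf{x}$.

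Next, I would collect these at most $2M$ subregions into a single partition of $\mathbb{R}^n$. By construction they are pairwise disjoint (subregions within the same $\mathfrak{R}_m$ are separated by $H_m$, and subregions from distinct $\mathfrak{R}_m$, $\mathfrak{R}_{m'}$ were already disjoint). Their boundaries are unions of the original hyperplanes that defined $\{\mathfrak{R}_m\}$ together with the new hyperplanes $H_m$, all of which pass through the origin — which is exactly what the definition of piece-wise linearity requires. On each subregion the composed function is linear with a well-defined weight vector (either $\mathbf{w}_m$ or $\mathbf{0}$).

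Finally, I would verify continuity of $\mathrm{ReLU} \circ f$, since the definition demands a continuous piece-wise linear function. This is immediate: $f$ is continuous by assumption and $\mathrm{ReLU}$ is continuous on $\mathbb{R}$, so the composition is continuous as well. I do not expect a serious obstacle in this proof; the only mildly delicate point is the bookkeeping around degenerate cases (for instance, $\mathbf{w}_m = \mathbf{0}$, in which case $\mathfrak{R}_m$ need not be subdivided at all, or $H_m$ may coincide with a boundary hyperplane of $\mathfrak{R}_m$), but these are easily absorbed by allowing "at most two" subregions per original region rather than exactly two.
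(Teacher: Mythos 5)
Your proposal is correct and follows essentially the same route as the paper's proof: each region is split by the kernel hyperplane of its linear piece (which necessarily passes through the origin), yielding the value $\mathbf{w}_m\cdot\mathbf{x}$ on one side and $0$ on the other. Your treatment is slightly more careful than the paper's about degenerate cases and about explicitly verifying continuity, but the underlying argument is identical.
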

\begin{theorem}
\label{dnn-pwlinear}
    The output of any hidden unit in an unbiased fully-connected ReLU neural network is a piece-wise linear function.
\end{theorem}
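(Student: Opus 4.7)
The plan is to proceed by induction on the depth of the layer containing the hidden unit, using the preceding lemma as the key step that handles each ReLU application. The crucial auxiliary fact I will need is that any finite linear combination of piece-wise linear functions (in the sense of the definition, with pieces of the form $\mathbf{w}_m\cdot\mathbf{x}$ on hyperplane-bounded regions through the origin) is again piece-wise linear; this is what lets the linear pre-activation of the next layer stay inside the class before ReLU is applied.

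First I would establish the base case. A unit in the first hidden layer of an unbiased fully-connected ReLU network computes $\mathrm{ReLU}(\mathbf{w}\cdot\mathbf{x})$ for some weight vector $\mathbf{w}\in\mathbb{R}^n$. The pre-activation $\mathbf{x}\mapsto\mathbf{w}\cdot\mathbf{x}$ is trivially piece-wise linear in the sense of the definition (take the single region $\mathfrak{R}_1=\mathbb{R}^n$, with no separating hyperplanes). Applying the preceding lemma then shows that the output of the unit is piece-wise linear.

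For the inductive step, assume every hidden unit in layer $\ell$ produces a piece-wise linear function of $\mathbf{x}$. A unit in layer $\ell+1$ computes $\mathrm{ReLU}\!\left(\sum_i \alpha_i f_i(\mathbf{x})\right)$, where each $f_i$ is piece-wise linear by the inductive hypothesis. To close the induction I would prove the auxiliary closure fact: if $f$ and $g$ are piece-wise linear with region collections $\{\mathfrak{R}_m^f\}$ and $\{\mathfrak{R}_k^g\}$ carved out by hyperplanes through the origin, then on each non-empty intersection $\mathfrak{R}_m^f\cap\mathfrak{R}_k^g$ the sum $f+g$ equals $(\mathbf{w}_m^f+\mathbf{w}_k^g)\cdot\mathbf{x}$. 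The intersections are themselves bounded by the union of the two hyperplane collections (still all through the origin), they tile $\mathbb{R}^n$, and continuity is inherited from $f$ and $g$, so $f+g$ fits the definition; scaling by $\alpha_i$ obviously preserves the form. Iterating gives that $\sum_i\alpha_i f_i$ is piece-wise linear, and one more application of the lemma to the outer ReLU completes the step.

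The only mildly delicate point, and where I would be careful, is verifying that the region structure required by the definition is genuinely preserved under summation, namely that the dividing sets remain \emph{hyperplanes through the origin} and that each piece is a pure linear (not affine) function $\mathbf{w}\cdot\mathbf{x}$ with no additive constant. This is exactly where the \emph{unbiased} hypothesis of the theorem is used: without biases, the linear pieces of every intermediate function pass through the origin, so their sums do as well, and the new separating hyperplanes introduced at each layer (where some $f_i$ switches sign inside a ReLU) also pass through the origin. Once this bookkeeping is in place, the induction carries through cleanly and the statement follows.
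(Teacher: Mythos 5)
Your proof is correct and follows essentially the same route as the paper, which simply notes that the result ``immediately follows'' from the composition lemma after observing that each pre-activation is a linear combination of piece-wise linear functions. You have in fact supplied the one detail the paper leaves implicit --- the closure of the class of (origin-centred, bias-free) piece-wise linear functions under linear combinations, via intersecting the two hyperplane arrangements --- which is a worthwhile addition but not a departure in method.
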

This is straightforward because the input to any hidden node is a linear combination of piece-wise linear functions and this input is composed with the ReLU activation function to yield the output, which is also piece-wise linear. However,
each region $\mathfrak{R}_m$ is the intersection of a different number of half-spaces, enclosed by various hyperplanes in $\mathbb{R}^n$. In general, these regions $\mathfrak{R}_m$ $(m=1, \cdots,M)$ do not have simple shapes. For the purpose of mathematical analysis, we need to decompose each region into a union of some well-defined shapes having a uniform form,  which is called {\em infinite simplex}. 
\begin{definition}
    Let $\mathbf{V}=\{{\bf v}_1, {\bf v}_2,..., {\bf v}_n \}$ be a set of $n$ linearly independent vectors in $\mathbb{R}^n$. An infinite simplex, $\mathfrak{R}_\mathbf{V}^+$, is defined as the region linearly spanned by $\mathbf{V}$ using only positive weights:
    \begin{align}
        \mathfrak{R}_\mathbf{V}^+=\left\{\sum_{k=1}^n\ \alpha_k {\bf v}_k\ \bigg\vert \ \alpha_k>0, \;\;  k=1,2,\cdots,n \right\}
    \end{align}
\end{definition}
\begin{theorem}
    \label{suire}
    Each piece-wise linear function $f(\mathbf{x})$ can be formulated as a summation of some simpler functions: $ f(\mathbf{x}) = \sum_{l=1}^L f_l(\mathbf{x})$, each of which is linear and non-zero only in an infinite simplex as follows:
\begin{equation}
\label{pislif}
f_l(\mathbf{x}) =\left\{\begin{array}{lr}
            \mathbf{w}_l \cdot \mathbf{x} & \mathbf{x}\in\mathfrak{R}_{\mathbf{V}_l}^+ \\
            0 & \mbox{otherwise}
        \end{array}
        \right.
\end{equation}
where $\mathbf{V}_l$ is a set of $n$ linearly independent vectors, and $\mathbf{w}_l$ is a weight vector.
\end{theorem}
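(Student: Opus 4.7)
The plan is to prove the claim by refining the polyhedral decomposition induced by $f$ into a simplicial one. By hypothesis, the hyperplanes separating the regions $\mathfrak{R}_m$ all pass through the origin, so each $\mathfrak{R}_m$ is an open polyhedral cone with apex at the origin, and on it $f$ agrees with a fixed linear form $\mathbf{w}_m \cdot \mathbf{x}$. What I need to exhibit is a further subdivision of each such cone into open simplicial cones of the form $\mathfrak{R}_{\mathbf{V}_l}^+$, because on each simplicial piece the restriction of $f$ is automatically linear and the pieces together tile $\mathbb{R}^n$ up to a measure-zero boundary.

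The first step I would carry out is a triangulation lemma: every open $n$-dimensional polyhedral cone in $\mathbb{R}^n$ with apex at the origin is a finite disjoint union of open simplicial cones $\mathfrak{R}_{\mathbf{V}_l}^+$ together with a set of measure zero lying on hyperplanes through the origin. A clean way to see this is to choose an affine hyperplane $H$ not containing the origin that meets every extreme ray of $\mathfrak{R}_m$ transversally; then $\mathfrak{R}_m \cap H$ is a bounded $(n-1)$-dimensional convex polytope which admits a triangulation into $(n-1)$-simplices by a standard pulling or barycentric construction. Coning each such $(n-1)$-simplex back to the origin yields an open simplicial cone spanned with strictly positive coefficients by $n$ linearly independent vectors $\mathbf{V}_l$, which is exactly an infinite simplex as defined above.

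The second step is to assemble the pieces. For each simplicial cone $\mathfrak{R}_{\mathbf{V}_l}^+$ produced from a parent region $\mathfrak{R}_m$, set $\mathbf{w}_l = \mathbf{w}_m$ and let $f_l(\mathbf{x})$ be $\mathbf{w}_l \cdot \mathbf{x}$ on $\mathfrak{R}_{\mathbf{V}_l}^+$ and zero elsewhere, matching the form in equation~\eqref{pislif}. For any $\mathbf{x}$ lying in the interior of some region, exactly one indicator contributes, and it contributes $\mathbf{w}_m \cdot \mathbf{x} = f(\mathbf{x})$. Hence $f(\mathbf{x}) = \sum_l f_l(\mathbf{x})$ holds on the complement of a finite union of hyperplanes through the origin, which is a null set and therefore irrelevant for the Fourier-analytic use that follows; continuity of $f$ handles the remaining boundary points if pointwise equality is desired.

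The main obstacle is the triangulation step, which is classical in polyhedral geometry but requires some care. The choice of $H$ transverse to every extreme ray of $\mathfrak{R}_m$ is only possible when the cone is pointed, i.e.\ contains no line. If some region fails to be pointed — for instance a half-space when too few hyperplanes are present — I would first insert additional hyperplanes through the origin to refine the fan until each piece is pointed; because these extra hyperplanes lie inside regions on which $f$ is already linear with a fixed weight vector, the decomposition of $f$ is not disturbed. With pointedness in hand, the simplicial refinement and the consequent definition of the $f_l$ follow immediately, completing the proof.
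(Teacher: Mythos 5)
Your proposal is correct and follows essentially the same route as the paper: each region is a polyhedral cone through the origin, you slice it with an affine hyperplane to get a convex polytope, triangulate that polytope, and cone each simplex back to the origin to obtain the infinite simplices $\mathfrak{R}_{\mathbf{V}_l}^+$. You are in fact somewhat more careful than the paper's own proof, since you explicitly handle non-pointed cones (by inserting extra hyperplanes through the origin) and the measure-zero boundary set, neither of which the paper addresses.
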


In practice, we can always assume that the input to neural networks, $\mathbf{x}$, is bounded. As a result, for computational convenience, we may normalize all inputs $\mathbf{x}$ into the unit hyper-cube, $U_n=[0,1]^n$. Obviously, this assumption can be easily incorporated into the above analysis by multiplying each $f_l(\mathbf{x})$ in eq.(\ref{pislif}) by $\prod_{r=1}^n h(x_r) h(1-x_r)$ where $h(x)$ is the Heaviside step function. Alternatively, we may simplify this term by adding $n^2$ additional hyperplanes to further split the input space to ensure all the elements of $\mathbf{x}$ do not change signs within each region $\mathfrak{R}_{\mathbf{V}_q}^+$. In this case, within each region $\mathfrak{R}_{\mathbf{V}_q}^+$, the largest absolute value among all elements of $\mathbf{x}$ is always achieved by a specific element, which is denoted as $r_q$. In other words,  the dimension $x_{r_q}$ achieves the largest absolute value inside $\mathfrak{R}_{\mathbf{V}_q}^+$.
Similarly, the normalized piece-wise linear function may be represented as a summation of some functions: $f(\mathbf{x}) =\sum_{q=1}^Q g_q(\mathbf{x})$, where each $g_q(\mathbf{x})$ $(q=1,2,\cdots,Q)$ has the following form:
\[
    g_q(\mathbf{x})=\left\{\begin{array}{ll}
        \mathbf{w}_q \cdot \mathbf{x}\ h(1-x_{r_q}) & \mathbf{x}\in\mathfrak{R}_{\mathbf{V}_q}^+ \\
        0 & \mbox{otherwise}
    \end{array}
    \right.
\]
For every $\mathbf{V}_q$,  there exists an $n\times n$ invertible matrix $\mathbf{A}_q$ to linearly transform all vectors of $\mathbf{V}_q$ into standard basis vectors $\mathbf{e}_i$ in $\mathbb{R}^n$. As a result,  
each function $g_q(\mathbf{x})$ may  be represented in terms of standard bases $\mathbf{V}_*=\{\mathbf{e}_1, \cdots, \mathbf{e}_n\}$ as follows:
\[
    g_q(\mathbf{x})=\left\{\begin{array}{ll}
        \bar{\mathbf{w}}_q \cdot \bar{\mathbf{x}}_q\ h(1-\mathbf{1} \cdot \bar{\mathbf{x}}_q) & \bar{\mathbf{x}}_q\in\mathfrak{R}^+_{\mathbf{V}_*} \\
        0 & \mbox{otherwise}
    \end{array}
    \right.
\]
where $\bar{\mathbf{x}}_q=\mathbf{x} \mathbf{A}^T_q$, and $\bar{\mathbf{w}}_q=\mathbf{w}_q\mathbf{A}^{-1}_q$.

\begin{lemma}
\label{gn}
Fourier transform of the following function:
\[ 
    s(\mathbf{x})=\left\{\begin{array}{lr}
        h(1-\mathbf{1} \cdot \mathbf{x}) & \mathbf{x}\in\mathfrak{R}^+_{\mathbf{V}_*} \\
        0 & \mbox{otherwise}
    \end{array}
    \right.
\]
may be presented as:
\begin{align}
    S(\boldsymbol{\omega})=\big(\frac{-\mathfrak{i}}{\sqrt{2\pi}}\big)^n\sum_{r=0}^n\frac{e^{-\mathfrak{i}\omega_r}}{\prod\limits_{r'\neq r}(\omega_{r'}-\omega_r)}
\end{align}
where $\omega_r$ is the $r$-th component of frequency vector $\boldsymbol{\omega}$ $(r=1,\cdots,n)$, and $\omega_0=0$.
\end{lemma}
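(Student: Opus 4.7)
The plan is to recognize $s(\mathbf{x})$ as the indicator function of the open standard simplex $\{\mathbf{x}\in\mathbb{R}^n : x_i>0,\ \sum_i x_i<1\}$, so that
\[
S(\boldsymbol{\omega}) \;=\; \frac{1}{(2\pi)^{n/2}} \int_{x_i>0,\ \sum x_i<1} e^{-\mathfrak{i}\,\boldsymbol{\omega}\cdot\mathbf{x}}\, d\mathbf{x}.
\]
Write $I_n(\omega_1,\dots,\omega_n)$ for this integral (without the $(2\pi)^{-n/2}$ prefactor). The claim is equivalent to showing $I_n = (-\mathfrak{i})^n \sum_{r=0}^{n} e^{-\mathfrak{i}\omega_r}/\prod_{r'\neq r}(\omega_{r'}-\omega_r)$, under the generic assumption that the $\omega_i$ (including $\omega_0=0$) are pairwise distinct; the remaining cases follow by continuity.

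I would proceed by induction on $n$. The base case $n=1$ is the direct evaluation $\int_0^1 e^{-\mathfrak{i}\omega_1 x_1}\,dx_1 = (1-e^{-\mathfrak{i}\omega_1})/(\mathfrak{i}\omega_1)$, which matches the claimed formula after using $1/\mathfrak{i}=-\mathfrak{i}$. For the inductive step, I would integrate out $x_n$ first: for fixed $x_1,\dots,x_{n-1}$ with $x_i>0$ and $\sigma := \sum_{i<n} x_i < 1$, we have
\[
\int_0^{1-\sigma} e^{-\mathfrak{i}\omega_n x_n}\,dx_n \;=\; \frac{1 - e^{-\mathfrak{i}\omega_n(1-\sigma)}}{\mathfrak{i}\omega_n}.
\]
Splitting the resulting integrand and pulling out $e^{-\mathfrak{i}\omega_n}$ gives the clean reduction
\[
I_n(\omega_1,\dots,\omega_n) \;=\; \frac{1}{\mathfrak{i}\omega_n}\Bigl[\,I_{n-1}(\omega_1,\dots,\omega_{n-1}) \;-\; e^{-\mathfrak{i}\omega_n}\,I_{n-1}(\omega_1-\omega_n,\dots,\omega_{n-1}-\omega_n)\,\Bigr].
\]

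Now I would apply the inductive hypothesis to both $I_{n-1}$ terms and collect the coefficient of each $e^{-\mathfrak{i}\omega_r}$ for $r=0,1,\dots,n$ in the result. The $r=0$ coefficient comes only from the first $I_{n-1}$ and lands immediately in the right shape. The $r=n$ coefficient comes only from the $r=0$ term of the shifted $I_{n-1}(\omega-\omega_n)$ (since the shift sends $\tilde\omega_0=0$ to the absolute frequency $\omega_n$), also in the right shape up to a sign. The interesting case is $1\le r\le n-1$, which gets contributions from both $I_{n-1}$ terms; these combine via the partial fraction identity
\[
\frac{1}{-\omega_r} - \frac{1}{\omega_n-\omega_r} \;=\; \frac{\omega_n}{-\omega_r(\omega_n-\omega_r)},
\]
so the $\omega_n$ in the numerator cancels the $1/\omega_n$ out front, and the surviving factors $(-\omega_r)(\omega_n-\omega_r)$ exactly supply the two extra terms $(\omega_0-\omega_r)$ and $(\omega_n-\omega_r)$ needed to extend the product in the denominator from $r'\in\{1,\dots,n-1\}\setminus\{r\}$ to $r'\in\{0,\dots,n\}\setminus\{r\}$. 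Finally the $\mathfrak{i}$-powers line up since $1/\mathfrak{i} = -\mathfrak{i}$ turns $(-\mathfrak{i})^{n-1}/(\mathfrak{i}\omega_n)$ into $(-\mathfrak{i})^n/\omega_n$, so $I_n$ takes exactly the required form, and multiplying by $(2\pi)^{-n/2}$ yields $S(\boldsymbol{\omega})$.

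The main obstacle is purely bookkeeping: keeping the $\omega_0=0$ convention straight under the variable shift $\omega_i\mapsto\omega_i-\omega_n$, because the ``$\tilde\omega_0=0$'' of the reduced problem corresponds, after combining with the prefactor $e^{-\mathfrak{i}\omega_n}$, to the $r=n$ term of the original problem rather than to $r=0$. Once that correspondence is recorded carefully, the partial-fraction step is mechanical and the induction closes.
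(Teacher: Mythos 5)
Your proof is correct, and it takes a genuinely different route from the paper. The paper factors the indicator of the simplex as $h(\mathbf{1}\cdot\mathbf{x})\,h(1-\mathbf{1}\cdot\mathbf{x})\prod_j h(x_j)h(1-x_j)$, takes the Fourier transform of the orthant part as a product of one-dimensional transforms, represents the transform of the slab $0<\mathbf{1}\cdot\mathbf{x}<1$ as a line integral against an $n$-dimensional Dirac delta supported on the diagonal, and then evaluates the resulting convolution, a single $\zeta$-integral of $e^{-\mathfrak{i}\zeta}\prod_{r=0}^n\frac{1-e^{-\mathfrak{i}(\omega_r-\zeta)}}{\omega_r-\zeta}$, by partial fractions together with the principal-value identity $\int\frac{e^{-\mathfrak{i}a\zeta}}{\zeta}\,d\zeta=-\mathfrak{i}\pi\,\mathrm{sign}(a)$ and a combinatorial cancellation over subsets $B\subseteq\{\omega_0,\dots,\omega_n\}$. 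Your induction on the dimension, integrating out $x_n$ to get the recursion $I_n=\frac{1}{\mathfrak{i}\omega_n}\bigl[I_{n-1}(\omega_{1:n-1})-e^{-\mathfrak{i}\omega_n}I_{n-1}(\omega_{1:n-1}-\omega_n\mathbf{1})\bigr]$, avoids distributions, delta functions, and conditionally convergent integrals entirely; I checked the recursion, the base case, the identification of the $r=0$ and $r=n$ terms, and the partial-fraction step for $1\le r\le n-1$, and everything closes as you describe (with the same $(2\pi)^{-n/2}$ convention the paper uses). Your argument is more elementary and arguably more rigorous than the paper's, at the cost of the bookkeeping you flag around the shifted convention $\tilde\omega_0=0$; the paper's convolution formulation has the modest advantage of exhibiting the answer as a symmetric function of $\{\omega_0,\dots,\omega_n\}$ in one shot rather than term by term. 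Do make sure the final write-up states explicitly that the formula is first established for pairwise distinct $\omega_0,\dots,\omega_n$ and extended by continuity, as you note.
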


Finally we derive the Fourier transform of fully-connected ReLU neural networks as follows. 
\begin{theorem}
\label{ft-dnn}
The Fourier transform of the output of any hidden node in a fully-connected unbiased\footnote{For mathematical convenience, we assume neural networks have no biases here. However, regular neural networks with biases may be reformulated as unbiased ones by adding another dimension of constants. Thus, the main results here are equally applicable to both cases. Note that regular neural networks with biases are used in our experiments in this paper. } ReLU neural network may be represented as 
$\sum_{q=1}^Q \mathbf{w}_q \mathbf{A}^{-1}_q  \nabla S(\boldsymbol{\omega} \mathbf{A}^{-1}_q )$,  where $\nabla$ denote the differential operator.
\end{theorem}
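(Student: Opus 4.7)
The plan is to work from the decomposition already assembled immediately before the theorem and reduce everything to Lemma \ref{gn} by means of two elementary Fourier identities: the change-of-variables rule for a linear substitution, and the duality between multiplication by $x_j$ in the spatial domain and differentiation with respect to $\omega_j$ in the frequency domain. Since the Fourier transform is linear, it suffices to compute $\widehat{g_q}(\boldsymbol{\omega})$ for each $q$ and sum the results.

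First I would rewrite $g_q(\mathbf{x})$ in the transformed coordinates $\bar{\mathbf{x}}_q = \mathbf{x}\mathbf{A}_q^T$ so that its support becomes the standard positive orthant and its indicator/Heaviside factor matches the function $s$ of Lemma \ref{gn}. That is, $g_q(\mathbf{x}) = (\bar{\mathbf{w}}_q\cdot\bar{\mathbf{x}}_q)\,s(\bar{\mathbf{x}}_q)$. The linear-substitution rule for the Fourier transform then gives
\begin{equation*}
\widehat{g_q}(\boldsymbol{\omega}) \;=\; |\det \mathbf{A}_q^{-1}|\; \widehat{h_q}(\boldsymbol{\omega}\mathbf{A}_q^{-1}),
\end{equation*}
where $h_q(\bar{\mathbf{x}}) = (\bar{\mathbf{w}}_q\cdot\bar{\mathbf{x}})\,s(\bar{\mathbf{x}})$; I would verify this by making the substitution $\bar{\mathbf{x}} = \mathbf{x}\mathbf{A}_q^T$ in the defining integral and noting that $\boldsymbol{\omega}\cdot\mathbf{x} = (\boldsymbol{\omega}\mathbf{A}_q^{-1})\cdot\bar{\mathbf{x}}$.

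Next I would apply the multiplication-by-coordinate identity componentwise: for each standard basis direction, $\widehat{\bar{x}_j\,s(\bar{\mathbf{x}})} = \mathfrak{i}\,\partial S/\partial\omega_j$, so summing against $\bar{\mathbf{w}}_q$ gives $\widehat{h_q}(\boldsymbol{\omega}) = \mathfrak{i}\,\bar{\mathbf{w}}_q\cdot\nabla S(\boldsymbol{\omega})$. Substituting the relation $\bar{\mathbf{w}}_q = \mathbf{w}_q\mathbf{A}_q^{-1}$ recorded just above Lemma \ref{gn}, and evaluating at the warped frequency $\boldsymbol{\omega}\mathbf{A}_q^{-1}$, yields each summand in the stated form (up to the scalar constants $\mathfrak{i}$ and $|\det\mathbf{A}_q^{-1}|$, which the theorem's compact notation evidently absorbs into the operator $\nabla S$). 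Summing over $q$ by linearity, invoking Theorem \ref{suire} together with the normalization reduction to $g_q$, produces $\sum_{q=1}^{Q}\mathbf{w}_q\mathbf{A}_q^{-1}\,\nabla S(\boldsymbol{\omega}\mathbf{A}_q^{-1})$ as claimed.

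The main obstacle I expect is not any single step in isolation but the careful bookkeeping of row-vs-column conventions and the two linear maps at play: the spatial change of variables by $\mathbf{A}_q^T$ induces the dual change $\boldsymbol{\omega}\mapsto\boldsymbol{\omega}\mathbf{A}_q^{-1}$ in frequency, and the same $\mathbf{A}_q^{-1}$ reappears when pulling $\mathbf{w}_q$ back through the substitution. Making sure the two appearances of $\mathbf{A}_q^{-1}$ are accounted for exactly once each — so that the final contraction $\mathbf{w}_q\mathbf{A}_q^{-1}\cdot\nabla S(\boldsymbol{\omega}\mathbf{A}_q^{-1})$ is correct rather than off by a transpose or a Jacobian determinant — is where I would focus the bulk of the verification.
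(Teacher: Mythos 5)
Your proposal is correct and follows essentially the same route as the paper's own proof: write $g_q(\mathbf{x})=\bar{\mathbf{w}}_q\cdot\bar{\mathbf{x}}_q\, s(\bar{\mathbf{x}}_q)$, apply the linear change of variables to get the warped frequency $\boldsymbol{\omega}\mathbf{A}_q^{-1}$, convert multiplication by coordinates into $\nabla S$, and sum over $q$ by linearity. You are in fact slightly more careful than the paper, which silently drops the same factor of $\mathfrak{i}$ and the Jacobian $|\det\mathbf{A}_q^{-1}|$ that you explicitly flag as absorbed into the notation.
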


Obviously, neural networks are the so-called approximated bandlimited models as defined in \citep{bandlimited-19}, which have decaying high frequency components in Fourier spectrum. 
Theorem \ref{ft-dnn} further shows how the matrices $\mathbf{A}^{-1}_q$ contribute to the high frequency components when the corresponding region $\mathfrak{R}_{\mathbf{V}_q}^+$ are too small.  This is clear since the determinant of $\mathbf{A}_q$ is proportional to the volume of $\mathfrak{R}_{\mathbf{V}_q}^+$ in $\mathbb{R}^n$. As we will show later, these small regions may be explicitly exploited to generate adversarial samples for neural networks. 

\subsection{Understanding adversarial samples}
\label{SEC:Understand}

\begin{figure}[t]
	\centering
	\includegraphics[width=1\linewidth]{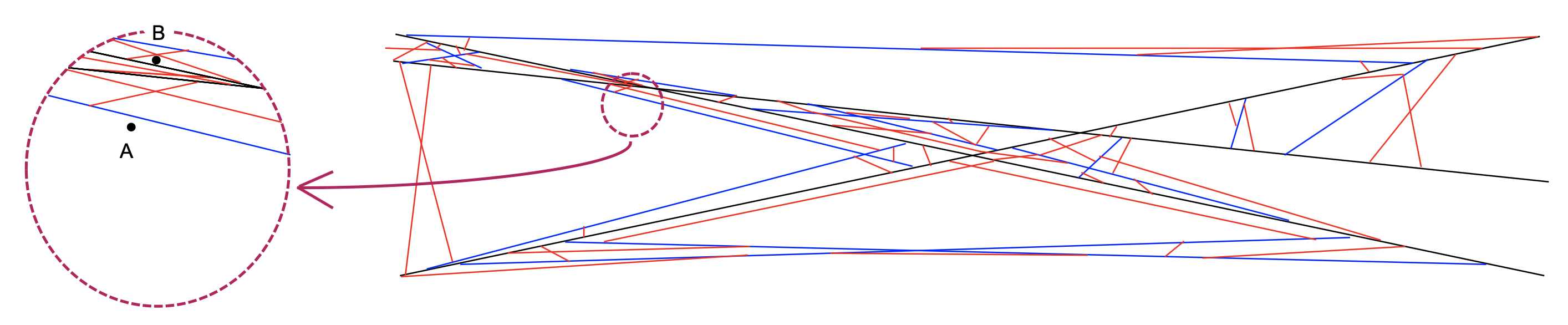}
	\caption{Illustration of input space divided into sub-regions by a biased neural network. The black lines are the hyperplanes for the first network layer, while the blue lines are for the second layer and the red lines are for the third layer. A small perturbation from point \(A\) to point \(B\) may possibly cross many hyperplanes.}
	\label{FIG:subRegions}
\end{figure}

As shown in Theorem \ref{dnn-pwlinear},  neural network may be viewed as a sequential division of the input space into many small regions, as illustrated in Figure~\ref{FIG:subRegions}. Each layer is a further division of the existing regions from the previous layers, with each region being divided differently. Hence a neural network with multiple layers would result in a tremendous amount of sub-regions in the input space. For example, when cutting an $n$-dimensional space using $N$ hyperplanes, the maximum number of regions may be computed as $ {N \choose 0} + {N \choose 1} + \cdots + {N \choose n}$. For a hidden layer of $N=1000$ nodes and input dimension is $n=200$, the maximum number of regions is roughly equal to $10^{200}$. In other words, even a middle-sized neural network can partition input space into a huge number of sub-regions, which can easily exceed the total number of atoms in the universe. When we learn a neural network, we can not expect there is at least one sample inside each region. For those regions that do not have any training sample, the resultant linear functions in them may be arbitrary since they do not contribute to the training objective function at all. Of course, most of these regions are extremely small in size. When we measure the expected loss function over the entire space, their contributions are negligible since the chance for a randomly sampled point to fall into these tiny regions is extremely small. However, adversarial attacking is imposing a new challenge since adversarial samples are not naturally sampled. Given that the total number of regions is huge, those tiny regions are almost everywhere in the input space. For any data point in the input space, we almost surely can find such a tiny region in proximity where the linear function is arbitrary. If a point inside this tiny region is selected, the output of the neural network may be unexpected. These tiny regions are the fundamental reason why neural networks are vulnerable to adversarial samples. 

In layered deep neural networks, the linear functions in all regions are not totally independent. If we use ${\bf v}^{(l)}$ to denote the weight matrix in layer $l$, the resultant linear weight $\mathbf{w}_k$ in eq.(\ref{pislif}) is actually the sum of all concatenated ${\bf v}^{(l)}$ along all active paths. When we make a small perturbation $\Delta {\bf x}$ to any input ${\bf x}$, the fluctuation in the output of any hidden node can be approximated represented as:
\begin{equation}
\Delta  f({\bf x}) \propto N \cdot \prod_{l}  {\bf E}\left[ \vert{\bf v}_{ij}^{(l)}\vert \right]
\end{equation}
where $N$ denotes the total number of hyperplanes to be crossed when moving ${\bf x}$ to ${\bf x}+\Delta {\bf x}$. In any practical neural network, we normally have at least tens of thousands of hyperplanes crossing the hypercube $U_n=[0,1]^n$. In other words, for any input ${\bf x}$ in a high-dimensional space, we can always move it to cross a large number of hyperplanes to enter a tiny region. When $N$ is fairly large, the above equation indicates that the output of a neural network can still fluctuate dramatically even after all weight vectors are regularized by $L_1$ or $L_2$ norm. 

At last, we believe the existence of many unlearned tiny regions is an intrinsic property of neural networks given its current model structure. Therefore, simple re-training strategies or small structure modifications will not be able to completely get rid of adversarial samples. In principle, neural networks must be strictly
bandlimited to filter out those decaying high frequency components in order to completely eliminate all adversarial samples.  We definitely need more research efforts to figure out how to do this effectively and efficiently for neural networks. 

\section{The proposed defence approach: post-averaging}

\subsection{Post-averaging}
\label{pave}

In this paper,  we propose a simple post-processing method to smooth out those high frequency components as much as possible, which relies on a simple idea similar to moving-average in one-dimensional sequential data. Instead of generating prediction merely from one data point, we use the averaged value within a small neighborhood around the data point, which is called {\em post-averaging} here. 
Mathematically, the post-averaging is computed as an integral over a small neighborhood centered at 
the input:
\begin{align}
    f_{C}(\mathbf{x})=\frac{1}{\mathbf{V}_{\!C}}\idotsint_{\mathbf{x}' \in C} \; f(\mathbf{x}-\mathbf{x}') \;  \text{d}\mathbf{x}'
\end{align}
where $\mathbf{x}$ is the input and $f(\mathbf{x})$ represents the output of the neural network, and 
$C$ denotes a small neighborhood centered at the origin and 
$\mathbf{V}_{\!C}$ denotes its volume. When we choose $C$ to be an $n$-sphere in $\mathbb{R}^n$ of radius $r$, we may simply derive the Fourier transform of $f_{C}(\mathbf{x})$ as follows:
\begin{equation}
    \label{ftpave}
    F_C(\boldsymbol{\omega}) =F(\boldsymbol{\omega})\frac{1}{\mathbf{V}_{\!C}}\idotsint_{\mathbf{x}'\in C} e^{-\mathfrak{i} 
    \mathbf{x}' \cdot \boldsymbol{\omega}}\ \text{d}\mathbf{x'}\\
    =F(\boldsymbol{\omega})\frac{\Gamma(\frac{n}{2}+1)}{\pi^{\frac{n}{2}}}\frac{J_{\frac{n}{2}}(r|\boldsymbol{\omega}|)}{(r|\boldsymbol{\omega}|)^{\frac{n}{2}}}
\end{equation}
where $J_{\frac{n}{2}}(\cdot)$ is the first kind Bessel function of order $n/2$. 
Since the Bessel functions, $J_{\nu}(\omega)$, decay with rate $1/\sqrt{\omega}$ as $|\omega| \to \infty$~\citep{watson1995treatise}, we have 
$F_C(\boldsymbol{\omega}) \sim  \frac{F(\boldsymbol{\omega})}{(r|\boldsymbol{\omega}|)^{\frac{n+1}{2}}}$ as $|\boldsymbol{\omega}| \to \infty$. Therefore, if $r$ is chosen properly, the post-averaging operation can significantly bandlimit neural networks by smoothing out high frequency components. Note that the similar ideas have been used in \citep{vbpc99,neighborhood03} to improve robustness in speech recognition.  

\subsection{Sampling methods}

However, it is intractable to compute the above integral for any meaningful neural network used in practical applications. In this work, we propose to use a simple numerical method to approximate it. For any input $\mathbf{x}$, we select $K$ points in the neighborhood $C$ centered at $\mathbf{x}$, i.e. $\{ \mathbf{x}_1, \mathbf{x}_2, \cdots, \mathbf{x}_K \}$ , the integral is approximately computed as 
\begin{equation}
\label{eq-numerical-integral}
f_{C}(\mathbf{x}) \approx \frac{1}{K} \sum_{k=1}^K \; f(\mathbf{x}_k).
\end{equation}

Obviously, in order to defend against adversarial samples, it is important to have samples outside the current unlearned tiny region. In the following, we use a simple sampling strategy based on directional vectors.
To generate a relatively even set of samples for eq.(\ref{eq-numerical-integral}), we first determine some directional vectors $\hat{\mathbf{v}}$, and then move the input $\mathbf{x}$ along these directions using several step sizes within the sphere of radius $r$:
\begin{equation}
\label{Eq:sampling}
    \mathbf{x}' = \mathbf{x} + \lambda \cdot \hat{\mathbf{v}}
\end{equation}
where $\lambda = [\pm \frac{r}{3}, \pm \frac{2r}{3}, \pm r]$, and  \(\hat{\mathbf{v}}\) is a  selected unit-length directional vector. For each selected direction, we generate six samples within C along both the positive and the negative directions to ensure efficiency and even sampling.
Here, we propose two different methods to sample directional vectors:

\begin{itemize}
	\item \textbf{random}: Random sampling is the simplest and most efficient method that one can come up with. We fill the directional vectors with random numbers generated from a standard normal distribution, and then normalize them to have unit length.
	
	\item \textbf{approx}: Instead of using random directions, it would be much more efficient to move out of the original region if we use the normal directions of the closest hyperplanes. In ReLU neural networks, each hidden node represents a hyperplane in the input space. For any input $\mathbf{x}$, the distance to each hyperplane may be computed as 
	$d_k^{(n)} = \frac{a_k^{(n)}}{\| \hat{\mathbf{v}}_k^{(n)} \|}$, where $a_k^{(n)}$ denotes the output of the corresponding hidden node and $\hat{\mathbf{v}}_k^{(n)} = \nabla_{\mathbf{x}} a_k^{(n)}$.
	Based on all distances computed for all hidden nodes, we can select the normal directions for the $K$ closest hyperplanes. However, computing the exact distances is computationally expensive as it requires back-propagation for all hidden nodes. In implementation, we simply estimate relative distances among all hidden units in the same layer using the weights matrix of this layer and select some closest hidden units in each layer based on the relative distances. In this way, we only need to  back-propagate for the selected units. We refer this implementation as "\textit{approx}" in the experimental results.
\end{itemize}

\section{Experiments}

In this section, we evaluate the above post-averaging method to defend against several popular adversarial attacking methods on the challenging ImageNet task. 

\subsection{Experimental setup}
\begin{itemize}
    \item \textbf{Dataset}: Since our proposed post-averaging method does not need to re-train neural networks, we do not need to use any training data in our experiments. For the evaluation purpose, we use the validation set of the ImageNet task~\citep{imageNet}. The validation set consists of 50000 images labelled into 1000 categories. Following settings in~\citep{sampleImageNet_1, sampleImageNet_2, randomLayer, defeated_4}, for computational efficiency, we randomly choose 5000 images from the ImageNet validation set and evaluate our approach on these 5000 images.
    \item \textbf{Target model}: We use a pre-trained VGG16 network~\citep{VGG} with batch normalization that is available from PyTorch. In our experiments, we directly use this pre-trained model without any modification.
    \item \textbf{Source of adversarial attacking methods}: We use Foolbox~\citep{foolbox}, an open source tool box to generate adversarial samples using different adversarial attacking methods.
    In this work, we have chosen four most popular attacking methods used in the literature: Fast Gradient Sign method (FGSM)~\citep{FGSM}, Projected Gradient Descent (PGD) method~\citep{BIM, PGD}, DeepFool (DF) attack method~\citep{DeepFool} and Carlini \& Wagner (C\&W) L2 attack method~\citep{CW}.
    \item \textbf{Threat model}: Inspired by~\citep{advSurvey, principle}, we adopt the commonly assumed threat model that: the adversarial can only attack during testing and they have complete knowledge of the target model. We also constrain the allowed perturbations by \(l_\infty\) norm \(\epsilon = \sfrac{8}{255}\).
\end{itemize}

\subsection{Evaluation criteria}
For each experiment, we define:
\begin{itemize}
    \item \textbf{Clean set}: The dataset that consists of the 5000 images randomly sampled from ImageNet.
    \item \textbf{Attacked set}: For every correctly classified image in the Clean set, if an adversarial sample is successfully generated under the attacking criteria, the original sample is replaced with the adversarial sample; if no adversarial sample is found, the original sample is kept in the dataset. Meanwhile, all the misclassified images are kept in the dataset without any change. Therefore the dataset also has 5000 images.
\end{itemize}

In our experiments, we evaluate the original model and the model defended using post-averaging on both the Clean and the Attacked sets. The performance is measured in terms of :
\begin{itemize}
    \item \textbf{Accuracy}: number of correctly classified images over the whole dataset.
    \item \textbf{Defence rate}: number of successfully defended adversarial samples over the total number of adversarial samples in the Attacked set. By "successfully defended", it refers to the case where an adversarial sample is correctly classified after the original model is defended by the post-averaging approach.
\end{itemize}

\subsection{Experimental results on top-1-miss criterion}

\begin{table}[t]
    \centering
    \caption{Performance of post-averaging against different top-1-miss attacking methods on ImageNet ($\epsilon=\sfrac{8}{255}$, $r=30$, and $K=60$).}
    \begin{tabular}[c]{p{4cm} c c c c c c}
        \hline
        & \multicolumn{2}{c}{Original Model} & \multicolumn{3}{c}{Defended by Post-Averaging} & \\ \hline
        & \multicolumn{2}{c}{Top-1 Accuracy} & \multicolumn{2}{c}{Top-1 Accuracy} & Defence & \\
        attack, defence & Clean & Attacked & Clean & Attacked & Rate & \#Adv \\ \hline
        FGSM, random & \multirow{2}{*}{0.7252} & \multirow{2}{*}{0.0224} & 0.7192 & 0.6958 & {\bf 0.9363} & \multirow{2}{*}{3514} \\
        FGSM, approx & & & 0.6786 & 0.6388 & 0.8372 & \\
        \hline
        PGD, random & \multirow{2}{*}{0.7252} & \multirow{2}{*}{0.0010} & 0.7190 & 0.7048 & {\bf 0.9508} & \multirow{2}{*}{3621}\\
        PGD, approx & & & 0.6786 & 0.6540 & 0.8630 & \\
        \hline
        DF, random & \multirow{2}{*}{0.7252} & \multirow{2}{*}{0.0120} & 0.7180 & 0.7052 & {\bf 0.9521} & \multirow{2}{*}{3571} \\
        DF, approx & &  & 0.6786 & 0.6578 & 0.8681 & \\
        \hline
        C\&W, random & \multirow{2}{*}{0.7252} & \multirow{2}{*}{0.0012} & 0.7188 & 0.7064 & {\bf 0.9533} & \multirow{2}{*}{3620} \\
        C\&W, approx & & & 0.6786 & 0.6600 & 0.8713 & \\
        \hline
    \end{tabular}
    \label{TAB:main_result}
\end{table}

 In the experiments reported in this subsection, we generated adversarial samples based on the top-1-miss criterion, which defines adversarial samples as images whose predicted classes are not the same as their true labels.

\begin{table}[t]
    \centering
    \caption{Performance of post-averaging on the sizes of used neighborhood ($\epsilon=\sfrac{8}{255}$, $K=60$).}
    \begin{tabular}[c]{p{4.2cm} c c c c c c}
        \hline
         & \multicolumn{2}{c}{Original Model} & \multicolumn{3}{c}{Defended by Post-Averaging} & \\ \hline
         & \multicolumn{2}{c}{Top-1 Accuracy} & \multicolumn{2}{c}{Top-1 Accuracy} & Defence & \\
        attack, defence & Clean & Attacked & Clean & Attacked & Rate & \#Adv \\ \hline
        FGSM, random(r=4) & \multirow{3}{*}{0.7252} & \multirow{3}{*}{0.0224} & 0.7238 & 0.4888 & 0.6614 & \multirow{3}{*}{3514} \\
        FGSM, random(r=15) & & & 0.7242 & 0.6860 & 0.9303 & \\
        FGSM, random(r=30) & & & 0.7192 & 0.6958 & 0.9363 & \\
        \hline
        FGSM, approx(r=4) & \multirow{3}{*}{0.7252} & \multirow{3}{*}{0.0224} & 0.7246 & 0.6534 & 0.8739 & \multirow{3}{*}{3514} \\
        FGSM, approx(r=15) & & & 0.7056 & 0.6534 & 0.8617 & \\
        FGSM, approx(r=30) & & & 0.6786 & 0.6388 & 0.8372 & \\
        \hline
    \end{tabular}
    \label{TAB:step_sizes}
\end{table}

Table~\ref{TAB:main_result} shows the performance of our defence approach against different attacking methods. In this table, the samples for post-averaging are selected within an $n$-sphere of radius $r=30$ as in eq.(\ref{Eq:sampling}), with $K=60$ different directions. For the \textit{approx} sampling method, we select 20 directional vectors from each of the last three fully-connected layers in the original VGG16 model, while for \textit{random} sampling we simply randomly generate 60 different directions. Both methods result in a total of \(60 \times 2 \times 3 + 1 = 361\) samples (including the input) for each input image to be used in eq.(\ref{eq-numerical-integral}). Moreover, all the adversarial samples generated are restricted to be within the perturbation range \(\epsilon=\sfrac{8}{255}\). We show the top-1 accuracy of the original model and the defended model on both the Clean and the Attacked set respectively, as well as the defence rate of the defended model. Besides, we also show the number of adversarial samples successfully generated by each attacking method in the last column.

From Table~\ref{TAB:main_result}, we can see that our proposed defence approach is universally robust to all of the attacking methods we have examined. It has achieved about 85\% defence rates in all the experiments with only a minor performance degradation in the Clean set. Especially when using the \textit{random} sampling method, our method can defend about 95\% adversarial samples while having very little performance degradation in the Clean set (less than 1\%). This is due to \textit{random} sampling can provide more evenly distributed sampling directions than \textit{approx} sampling when the neighborhood is large enough ($r=30$).

However, when the used neighborhood is small, we may anticipate that \textit{random} sampling may be more sensitive to the neighborhood size $r$. In this case, the randomly sampled directions are usually not the normal directions of the closest hyperplanes so that small radius may not be sufficient to move out the current region. To investigate this problem, we have tested both sampling methods on 3 different radii. Experimental results are shown in Table~\ref{TAB:step_sizes}. As we can see, the defence rates drop significantly from 94\% to 66\% for \textit{random} sampling when a smaller radius is used while for \textit{approx} sampling it can even get slight performance improvement when a smaller radius is used. Therefore, we recommend to use relatively larger radii for \textit{random} sampling and relatively smaller radii for \textit{approx} sampling. Moreover, we may improve and stabilize the performance by combining two methods with an ensemble model, which will be left for future investigation.

At the end, we have also investigated the effect on performance when using different numbers of sampling directions. As shown in Table~\ref{TAB:num_directions}, the defence performance doesn't vary much when much less sampling directions are used. For example, our defence approach still retains very good performance even when $K=6$ is used, in which only $6 \times 2 \times 3 + 1 = 37$ samples are evaluated for each input image. These samples can be easily packed into a mini-batch for very fast computation in GPUs. Hence when time efficiency is a concern, we can significantly reduce the number of sampling directions for faster defensive evaluation.

\begin{table}[t]
    \centering
    \caption{Performance of post-averaging on the number of sampling directions ($\epsilon=\sfrac{8}{255}$, $r=30$).}
    \begin{tabular}[c]{p{4.2cm} c c c c c c}
        \hline
         & \multicolumn{2}{c}{Original Model} & \multicolumn{3}{c}{Defended by Post-Averaging} & \\ \hline
         & \multicolumn{2}{c}{Top-1 Accuracy} & \multicolumn{2}{c}{Top-1 Accuracy} & Defence & \\
        attack, defence & Clean & Attacked & Clean & Attacked & Rate & \#Adv \\ \hline
        FGSM, random(K=6) & \multirow{3}{*}{0.7252} & \multirow{3}{*}{0.0224} & 0.7180 & 0.6944 & 0.9351 & \multirow{3}{*}{3514} \\
        FGSM, random(K=15) & & & 0.7194 & 0.6948 & 0.9351 & \\
        FGSM, random(K=60) & & & 0.7192 & 0.6958 & 0.9363 & \\
        \hline
        FGSM, approx(K=6) & \multirow{3}{*}{0.7252} & \multirow{3}{*}{0.0224} & 0.6754 & 0.6128 & 0.7999 & \multirow{3}{*}{3514} \\
        FGSM, approx(K=15) & & & 0.6802 & 0.6280 & 0.8190 & \\
        FGSM, approx(K=60) & & & 0.6786 & 0.6388 & 0.8372 & \\
        \hline
    \end{tabular}
    \label{TAB:num_directions}
\end{table}

\subsection{Experimental results on top-5-miss criterion}

For image classification on the ImageNet task, it is usually more reasonable to use top-5 accuracy due to large number of confusing classes and multi-label cases, in this subsection, we have also evaluated our defence approach against adversarial samples that are generated based on the top-5-miss criterion. Under the top-5-miss criterion, adversarial samples are defined as images whose true labels are not among their top 5 predictions. Note that although the adversarial samples are easier to defend under the top-5-miss criterion, the adversarial samples generated are actually much stronger since the true labels are pushed out of the top-5 predictions. Experimental results are shown in Table~\ref{TAB:top5_results}. As shown in the table, our defence approach using \textit{random} sampling can achieve over 97\% defence rates against all four attacking methods.  Meanwhile, when measured by top-5 accuracy, we can see that the defended models using \textit{random} sampling yield almost no performance degradation in the Clean set and achieves only a small performance degradation (about 1-3\%) in the Attacked set. 

\begin{table}[t]
    \centering
    \caption{Performance of post-averaging against different top-5-miss attacking methods on ImageNet ($\epsilon=\sfrac{8}{255}$, $r=30$, and $K=60$).}
    \resizebox{1\textwidth}{!}{
    \begin{tabular}[c]{p{3.2cm} c c c c c c c c c c}
        \hline
        & \multicolumn{4}{c}{Original Model} & \multicolumn{5}{c}{Defended by Post-Averaging} & \\ \hline
         & \multicolumn{2}{c}{Top-1 Accuracy} & \multicolumn{2}{c}{Top-5 Accuracy} & \multicolumn{2}{c}{Top-1 Accuracy} &  \multicolumn{2}{c}{Top-5 Accuracy} & Defence & \\
        attack,defence & Clean & Attacked & Clean & Attacked & Clean & Attacked & Clean & Attacked & Rate & \#Adv \\ \hline
        FGSM, random & \multirow{2}{*}{0.7252} & \multirow{2}{*}{0.1306} & \multirow{2}{*}{0.9136} & \multirow{2}{*}{0.1544} & 0.7184 & 0.3436 & 0.9106 & 0.8892 & {\bf 0.9565} & \multirow{2}{*}{3796} \\
		FGSM, approx & & & & & 0.6786 & 0.3874 & 0.8856 & 0.7914 & 0.8177 & \\
		\hline
		PGD, random & \multirow{2}{*}{0.7252} & \multirow{2}{*}{0.0074} & \multirow{2}{*}{0.9136} & \multirow{2}{*}{0.0096} & 0.7190 & 0.3172 & 0.9112 & 0.9014 & {\bf 0.9768} & \multirow{2}{*}{4520}\\
		PGD, approx & & & & & 0.6786 & 0.3902 & 0.8856 & 0.8290 & 0.8883 & \\
		\hline
		DF, random & \multirow{2}{*}{0.7252} & \multirow{2}{*}{0.0454} & \multirow{2}{*}{0.9136} & \multirow{2}{*}{0.0534} & 0.7180 & 0.5006 & 0.9110 & 0.9034 & {\bf 0.9788} & \multirow{2}{*}{4301} \\
		DF, approx & & & & & 0.6786 & 0.5528 & 0.8856 & 0.8642 & 0.9247 & \\
		\hline
		C\&W, random & \multirow{2}{*}{0.7252} & \multirow{2}{*}{0.1116} & \multirow{2}{*}{0.9136} & \multirow{2}{*}{0.3000} & 0.7196 & 0.3214 & 0.9116 & 0.9054 & {\bf 0.9889} & \multirow{2}{*}{3068} \\
		C\&W, approx & & & & & 0.6786 & 0.4294 & 0.8856 & 0.8458 & 0.9263 & \\
        \hline
    \end{tabular}}
    \label{TAB:top5_results}
\end{table}

\section{Final remarks}

In this paper, we have presented some theoretical results on Fourier analysis of ReLU neural networks. These results are useful for us to understand why neural networks are vulnerable to adversarial samples. As a possible defence strategy, we have proposed a simple post-averaging method. Experimental results on ImageNet have demonstrated that our simple defence technique turns to be very effective against many popular attack methods in the literature. Finally, it will be interesting to see whether our post-averaging method will be still robust against any new attack methods in the future. 

\section*{Acknowledgments}
%
%
This work is supported partially by a research donation from iFLYTEK Co., Ltd., Hefei, China, and a discovery grant from Natural Sciences and Engineering Research Council (NSERC) of Canada.

\bibliography{PostStageDefense}
\bibliographystyle{plainnat}

\clearpage
\newpage

\section*{Appendix: Mathematical proofs}

\setcounter{section}{2}

\begin{definition}
    A piece-wise linear function is a continuous function $f:\mathbb{R}^n \xrightarrow{} \mathbb{R}$ such that there are some hyperplanes passing through origin and dividing $\mathbb{R}^n$ into $M$ pairwise disjoint regions $\mathfrak{R}_m$, $(m=1,2,...,M)$, on each of which $f$ is linear:
$$  
        f(\bold{x})=\left\{\begin{array}{ll}
            \bold{w}_1 \cdot \bold{x} & \bold{x}\in\mathfrak{R}_1 \\
            \bold{w}_2 \cdot \bold{x} & \bold{x}\in\mathfrak{R}_2 \\
            \vdots\\
            \bold{w}_M \cdot \bold{x} & \bold{x}\in\mathfrak{R}_M
            \end{array} \right.
$$  
\end{definition}
\begin{lemma}
\label{lemre-app}
Composition of a piece-wise linear function with a ReLU activation function is also a piece-wise linear function.
\end{lemma}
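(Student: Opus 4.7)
The plan is to show that the composition $\sigma \circ f$, where $\sigma$ denotes ReLU, satisfies the definition of a piece-wise linear function by refining the existing partition of $\mathbb{R}^n$ with finitely many additional hyperplanes through the origin. Start from the given partition $\{\mathfrak{R}_1,\dots,\mathfrak{R}_M\}$ of $\mathbb{R}^n$ associated with $f$, on which $f(\mathbf{x})=\mathbf{w}_m\cdot\mathbf{x}$. For each $m$ with $\mathbf{w}_m\neq \mathbf{0}$, introduce the hyperplane $H_m=\{\mathbf{x}\in\mathbb{R}^n : \mathbf{w}_m\cdot\mathbf{x}=0\}$, which passes through the origin. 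Cases where $\mathbf{w}_m=\mathbf{0}$ need no new hyperplane since $\sigma(f(\mathbf{x}))\equiv 0$ on $\mathfrak{R}_m$.

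Next, I would describe the refined partition. The original defining hyperplanes of $f$ together with $\{H_m\}$ form a finite collection of hyperplanes through the origin, which induce a new partition $\{\mathfrak{R}'_k\}$ of $\mathbb{R}^n$ into pairwise disjoint cells (up to measure-zero boundaries, handled by the usual convention). Each $\mathfrak{R}'_k$ is contained in some original $\mathfrak{R}_m$, and on that cell the sign of $\mathbf{w}_m\cdot\mathbf{x}$ is constant. Therefore on $\mathfrak{R}'_k$, the composition takes the explicit linear form
\[
\sigma(f(\mathbf{x})) \;=\; \begin{cases} \mathbf{w}_m \cdot \mathbf{x}, & \text{if } \mathbf{w}_m\cdot\mathbf{x}>0 \text{ on } \mathfrak{R}'_k,\\ \mathbf{0}\cdot\mathbf{x}, & \text{if } \mathbf{w}_m\cdot\mathbf{x}<0 \text{ on } \mathfrak{R}'_k, \end{cases}
\]
so the composition is linear on each cell with weight vector in $\{\mathbf{w}_m, \mathbf{0}\}$.

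Finally, I would verify the continuity requirement of the definition. Since $f$ is continuous by hypothesis and $\sigma$ is continuous on $\mathbb{R}$, the composition $\sigma\circ f$ is continuous on $\mathbb{R}^n$; in particular the two linear pieces agree on the boundary $H_m\cap\mathfrak{R}_m$ where $\mathbf{w}_m\cdot\mathbf{x}=0$. Combined with the refined partition by hyperplanes through the origin and the explicit linear representation on each cell, this is exactly the definition of a piece-wise linear function, completing the proof.

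The only mildly delicate step is bookkeeping: one must check that the new hyperplanes $H_m$ genuinely pass through the origin (they do, by construction) and that degenerate cases $\mathbf{w}_m=\mathbf{0}$ do not introduce spurious non-hyperplanes. Beyond that the argument is essentially a direct verification of the definition, so I do not expect a substantive obstacle.
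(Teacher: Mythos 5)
Your proof is correct and follows essentially the same route as the paper's: you split each region along the kernel hyperplane $\{\mathbf{w}_m\cdot\mathbf{x}=0\}$ of its linear piece (which passes through the origin), observe that ReLU acts as identity on one side and zero on the other, and note continuity across the new boundary. Your treatment is somewhat more careful than the paper's about the degenerate case $\mathbf{w}_m=\mathbf{0}$ and the continuity check, but the underlying argument is identical.
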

\begin{proof}
Let r(.) denote the ReLU activation function. If $f(\bold{x})$ on region $\mathfrak{R}_m$ takes both positive and negative values, $r\big(f(\bold{x})\big)$ will break it into two regions $\mathfrak{R}_p^+$ and $\mathfrak{R}_p^0$. On the former $r\big(f(\bold{x})\big)=f(\bold{x})$ and on the latter $r\big(f(\bold{x})\big)=0$, which both are linear functions. As $f(\bold{x})$ on $\mathfrak{R}_p$ is linear, common boundary of $\mathfrak{R}_p^+$ and $\mathfrak{R}_p^0$ lies inside a hyperplane passing through origin -- which is the kernel of the linear function. Therefore, if $f(\bold{x})$ is a piece-wise linear function defined by $k$ hyperplanes resulting in $M$ regions, $r\big(f(\bold{x})\big)$ will be a piece-wise linear function defined by at most $k+m$ hyperplanes.
\end{proof}

\begin{theorem}
\label{dnn-pwlinear-app}
    The output of any hidden unit in an unbiased fully-connected ReLU neural network is a piece-wise linear function.
\end{theorem}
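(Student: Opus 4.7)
The plan is to proceed by induction on the depth of the network, using the Lemma on ReLU composition as the final punch. The base case is trivial: the output of any input node is a coordinate map $\mathbf{e}_i \cdot \mathbf{x}$, which is linear on all of $\mathbb{R}^n$ and hence piece-wise linear in the sense of the Definition with $M=1$ and $\mathfrak{R}_1=\mathbb{R}^n$.

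For the inductive step, I would assume every output of layer $\ell$ is piece-wise linear. Fix a hidden unit $u$ in layer $\ell+1$; because the network is unbiased, its output has the form $r\bigl(\sum_{j} c_j\, g_j(\mathbf{x})\bigr)$, where $r$ is ReLU, the scalars $c_j$ are weights, and each $g_j$ is a piece-wise linear output of layer $\ell$ by the induction hypothesis. To invoke the previous Lemma I first need to show that the pre-activation $h(\mathbf{x}) = \sum_j c_j g_j(\mathbf{x})$ is itself piece-wise linear; once this is established, the Lemma immediately yields that $r(h(\mathbf{x}))$ is piece-wise linear, closing the induction.

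The main obstacle, and the only nonroutine step, is therefore proving closure of the class of piece-wise linear functions (as defined in the paper) under finite linear combinations. My approach is to handle two summands and iterate. Given $g_1,g_2$ with region families $\{\mathfrak{R}^{(1)}_m\}$ and $\{\mathfrak{R}^{(2)}_{m'}\}$, each cut out by finitely many hyperplanes through the origin, I would form the common refinement whose regions are the nonempty intersections $\mathfrak{R}^{(1)}_m \cap \mathfrak{R}^{(2)}_{m'}$. On each such intersection both $g_1$ and $g_2$ restrict to linear maps, so $c_1 g_1 + c_2 g_2$ is linear there. The boundaries of the refined partition are contained in the union of the two original hyperplane collections, which still consists of hyperplanes through the origin, and the sum inherits continuity from its summands. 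Thus the Definition is satisfied.

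The one subtlety inside that argument is continuity across the new boundaries: it holds because each $g_j$ is globally continuous by definition and a finite linear combination of continuous functions is continuous, so no separate gluing check is needed on the intersection boundaries. With closure under linear combinations in hand, applying the Lemma once per hidden unit completes the induction on layers and gives the theorem.
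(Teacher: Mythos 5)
Your proof is correct and follows essentially the same route as the paper, which simply states that the theorem "immediately follows" from the ReLU-composition lemma. Your explicit induction on depth, and in particular your common-refinement argument showing that the class of piece-wise linear functions is closed under finite linear combinations (needed to handle the pre-activation before the lemma can be applied), supplies exactly the detail the paper leaves implicit.
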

\begin{proof}
This proposition immediately follows lemma \ref{lemre-app}.
\end{proof}
\begin{definition}
    Let $\bold{V}=\{{\bf v}_1, {\bf v}_2,..., {\bf v}_n\}$ be a set of $n$ independent vectors in $\mathbb{R}^n$. An infinite simplex, $\mathfrak{R}_\bold{V}^+$, is defined as the region linearly spanned by $\bold{V}$ using only positive weights:
    \begin{align}
        \mathfrak{R}_\bold{V}^+=\{\sum_{k=1}^n\ \alpha_k {\bf v}_k\ |\ \forall k\ \alpha_k>0\}
    \end{align}
\end{definition}
\begin{theorem}
    \label{suire-app}
    Each piece-wise linear function $f(\bold{x})$ can be formulated as a summation of some functions: $ f(\bold{x}) = \sum_{k=1}^K f_k(\bold{x})$, each of which is linear and non-zero only in an infinite simplex as follows:
$$ 
        \label{pislif-app}
        f_k(\bold{x})=\left\{\begin{array}{lr}
            \bold{w}_k \cdot \bold{x} & \bold{x}\in\mathfrak{R}_{\bold{V}_k}^+ \\
            0 & \mbox{otherwise}
        \end{array}
        \right.
$$  
    where $\bold{V}_k$ is a set of $n$ independent vectors, and $\bold{w}_k$ is a weight vector.
\end{theorem}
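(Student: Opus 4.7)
The plan is to exploit the fact that every dividing hyperplane passes through the origin, which forces each region $\mathfrak{R}_m$ to be a convex polyhedral cone with apex at $\mathbf{0}$, and then to triangulate each such cone into infinite simplices on which $f$ is already linear.

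First, I would verify the geometric setup. Each $\mathfrak{R}_m$ is a connected component of $\mathbb{R}^n$ minus a finite union of hyperplanes through the origin. For each dividing hyperplane $H_i$, the region $\mathfrak{R}_m$ lies entirely on one side (otherwise it would be disconnected by $H_i$), so $\mathfrak{R}_m$ is the intersection of finitely many open half-spaces whose boundaries all contain $\mathbf{0}$. This immediately implies two facts I will use: $\mathfrak{R}_m$ is convex, and $\mathfrak{R}_m$ is closed under multiplication by positive scalars, i.e.\ it is an open convex polyhedral cone with apex at the origin.

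Second, I would subdivide each such cone into infinite simplices. A convex polyhedral cone in $\mathbb{R}^n$ with apex at $\mathbf{0}$ is generated by its finitely many extreme rays; call their direction vectors $\mathbf{u}_1,\dots,\mathbf{u}_p$. When $p=n$ and the directions are linearly independent, the (closure of the) cone is already an infinite simplex $\mathfrak{R}_{\mathbf{V}}^+$ with $\mathbf{V}=\{\mathbf{u}_1,\dots,\mathbf{u}_n\}$. When $p>n$, I would invoke the standard triangulation of polyhedral cones: pick any interior ray direction (for example the sum $\mathbf{u}_1+\cdots+\mathbf{u}_p$), cone over a triangulation of each proper facet, and iterate on dimension. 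This yields finitely many simplicial sub-cones $\mathfrak{R}_{\mathbf{V}_{m,j}}^+$ that cover $\mathfrak{R}_m$ and whose interiors are pairwise disjoint. An induction on $n$ gives the existence of this triangulation cleanly.

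Third, I would assemble the decomposition. Since $f$ agrees with the single linear map $\mathbf{x}\mapsto \mathbf{w}_m\cdot\mathbf{x}$ throughout $\mathfrak{R}_m$, I define, for each $m$ and each simplicial sub-cone $\mathfrak{R}_{\mathbf{V}_{m,j}}^+$ of $\mathfrak{R}_m$,
\[
f_{m,j}(\mathbf{x}) = \begin{cases} \mathbf{w}_m\cdot\mathbf{x} & \mathbf{x}\in \mathfrak{R}_{\mathbf{V}_{m,j}}^+\\ 0 & \text{otherwise.}\end{cases}
\]
Relabeling the pairs $(m,j)$ as a single index $k$ and setting $\mathbf{w}_k=\mathbf{w}_m$, I obtain the claimed form. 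Because the open simplicial sub-cones partition $\mathbb{R}^n$ up to the measure-zero union of hyperplanes and $f$ is continuous, the identity $f(\mathbf{x})=\sum_k f_k(\mathbf{x})$ holds everywhere on the union of the open cones, and extends to $\mathbb{R}^n$ by continuity (both sides are piece-wise linear and agree on a dense set).

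The main obstacle is the triangulation step: turning a convex polyhedral cone with possibly many extreme rays into simplicial cones spanned by exactly $n$ linearly independent vectors. This is classical in polyhedral geometry but requires a small induction (on dimension or on the number of extreme rays) and care that the resulting sub-cones are simplicial rather than merely convex. The convexity and cone properties secured in the first step are what make this triangulation possible; the rest of the argument is essentially bookkeeping.
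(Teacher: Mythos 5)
Your proof is correct and follows essentially the same route as the paper: recognize that each region $\mathfrak{R}_m$ is a convex polyhedral cone with apex at the origin, triangulate it into simplicial cones spanned by $n$ linearly independent vectors, and restrict the linear piece $\mathbf{w}_m\cdot\mathbf{x}$ to each. The only cosmetic difference is that the paper obtains the triangulation by intersecting the cone with an affine hyperplane and triangulating the resulting convex polytope, whereas you triangulate the cone directly via its extreme rays; the two constructions are standard and equivalent.
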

\begin{proof}
Each region $\mathfrak{R}_p$ of a piece-wise linear function, $f(\bold{x})$, which describes the behavior of a ReLU node if intersects with an affine hyper-plane results in a convex polytope. This convex polytope can be triangulated into some simplices. Define $\bold{V}_k$, $(k=1,2,...,K)$, sets of vertexes of these simplices. The infinite simplexes created by these vector sets will have the desired property and $f(\bold{x})$ can be written as:
$
    \label{ftofij-app}
    f(\bold{x}) = \sum_{k=1}^K f_k(\bold{x})
$.
\end{proof}

As explained earlier in the original article by adding $n^2$ hyper-planes to those defining the piece-wise linear function, the output of a ReLU node may be represented as $f(\mathbf{x}) =\sum_{q=1}^Q g_q(\mathbf{x})$. These hyper-planes are those perpendicular to standard basis vectors and subtraction of one of these vectors from another one. That is, $\mathbf{e}_i\ (i=1,\dots, n)$ and $\mathbf{e}_i-\mathbf{e}_j\ (1\leq i<j\leq n)$. Given this representation, the final step to achieve the Fourier transform is the following lemma:
\begin{lemma}
\label{gn-app}
Fourier transform of the following function:
$$
    s(\bold{x})=\left\{\begin{array}{lr}
        h(1-\bold{1} \cdot \bold{x}) & \bold{x}\in\mathfrak{R}^+_{\bold{V}_*} \\
        0 & \mbox{otherwise} 
    \end{array}
    \right.
$$

may be presented as:
\begin{align}
    S(\boldsymbol{\omega})=\big(\frac{-\mathfrak{i}}{\sqrt{2\pi}}\big)^n\sum_{r=0}^n\frac{e^{-\mathfrak{i}\omega_r}}{\prod\limits_{r'\neq r}(\omega_{r'}-\omega_r)}
\end{align}
where $\omega_r$ is the $r$th component of frequency vector $\boldsymbol{\omega}$ $(r=1,\cdots,n)$, and $\omega_0=0$.
\end{lemma}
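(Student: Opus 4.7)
The function $s(\mathbf{x})$ is, after unpacking the definitions, the indicator of the open standard $n$-simplex $\Sigma_n = \{\mathbf{x}\in\mathbb{R}^n : x_i>0,\ \sum_{i=1}^n x_i<1\}$. With the Fourier convention used in the paper, the claim reduces to
\[
\int_{\Sigma_n} e^{-\mathfrak{i}\boldsymbol{\omega}\cdot\mathbf{x}}\,d\mathbf{x} \;=\; (-\mathfrak{i})^n \sum_{r=0}^n \frac{e^{-\mathfrak{i}\omega_r}}{\prod_{r'\neq r}(\omega_{r'}-\omega_r)},
\]
after multiplying by the normalization $(2\pi)^{-n/2}$. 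My plan is to prove this by induction on $n$.

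For $n=1$ the integral is elementary: $\int_0^1 e^{-\mathfrak{i}\omega x}\,dx = (1-e^{-\mathfrak{i}\omega})/(\mathfrak{i}\omega)$, which matches the stated formula with $\omega_0=0$. For the inductive step, I write the integral as an iterated one and integrate $x_n$ first over $[0,\,1-\sum_{k<n}x_k]$; this produces the factor $(1-e^{-\mathfrak{i}\omega_n(1-\sum x_k)})/(\mathfrak{i}\omega_n)$. Expanding, the integral splits into two $(n-1)$-dimensional simplex integrals: one with frequencies $(\omega_1,\ldots,\omega_{n-1})$ and one (multiplied by $e^{-\mathfrak{i}\omega_n}$) with shifted frequencies $(\omega_1-\omega_n,\ldots,\omega_{n-1}-\omega_n)$. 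Applying the inductive hypothesis to each gives two sums of the target shape but in dimension $n-1$, which must then be assembled into the $n$-dimensional formula.

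The main obstacle is this assembly. The key algebraic identity is
\[
\frac{1}{\mathfrak{i}\omega_n}\left[\frac{1}{-\omega_r}\;-\;\frac{1}{\omega_n-\omega_r}\right] \;=\; \frac{-\mathfrak{i}}{(-\omega_r)(\omega_n-\omega_r)},
\]
which absorbs the $\mathfrak{i}\omega_n$ in the denominator into an additional factor $(\omega_n-\omega_r)$. This promotes each interior index $1\le r\le n-1$ to carry the full product $\prod_{r'\in\{0,\ldots,n\}\setminus\{r\}}(\omega_{r'}-\omega_r)$, accumulating one more $(-\mathfrak{i})$ to give the $(-\mathfrak{i})^n$ prefactor. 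The remaining pieces are the $r=0$ contribution of the first integral and the $r=0$ contribution of the shifted second integral, whose denominator becomes $\prod_{r'=1}^{n-1}(\omega_{r'}-\omega_n)$; these combine to produce exactly the $r=0$ and $r=n$ boundary terms of the final sum after factoring out $-\omega_n$ from the $r=n$ product. Verifying this bookkeeping term by term is the bulk of the work; the rest is routine substitution.

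As an alternative I could recognize the right-hand side as $(-1)^n$ times the $n$-th divided difference $[e^{-\mathfrak{i}\,\cdot}](\omega_0,\ldots,\omega_n)$ and invoke the Hermite–Genocchi integral representation, which, using $\omega_0=0$ to eliminate the barycentric coordinate $s_0$, directly identifies this divided difference with $(-\mathfrak{i})^n\int_{\Sigma_n}e^{-\mathfrak{i}\boldsymbol{\omega}\cdot\mathbf{x}}d\mathbf{x}$. This route is shorter but relies on external machinery, whereas the inductive argument above is self-contained and stays within the elementary toolkit already used in the paper.
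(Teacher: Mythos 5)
Your proof is correct, but it takes a genuinely different route from the paper's. The paper factors $s(\mathbf{x})$ as $h(\mathbf{1}\cdot\mathbf{x})\,h(1-\mathbf{1}\cdot\mathbf{x})\prod_j h(x_j)h(1-x_j)$, takes the Fourier transform of each factor (the diagonal constraint becoming a distribution supported on the line $\boldsymbol{\omega}=\zeta\mathbf{1}$), and reduces everything by convolution to a single one-dimensional principal-value integral $\int_{-\infty}^{\infty} e^{-\mathfrak{i}\zeta}\prod_{r=0}^n\frac{1-e^{-\mathfrak{i}(\omega_r-\zeta)}}{\omega_r-\zeta}\,\mathrm{d}\zeta$, which it then evaluates by partial fractions and a subset-cancellation argument. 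You instead integrate directly over the simplex, peeling off one coordinate at a time and inducting on $n$. I checked your recursion $I_n=\frac{1}{\mathfrak{i}\omega_n}\bigl[I_{n-1}(\omega_1,\dots,\omega_{n-1})-e^{-\mathfrak{i}\omega_n}I_{n-1}(\omega_1-\omega_n,\dots,\omega_{n-1}-\omega_n)\bigr]$ and your key identity, and the bookkeeping does close up: each interior index $1\le r\le n-1$ acquires the extra factor $(\omega_n-\omega_r)$ and one additional power of $-\mathfrak{i}$, while the two $r=0$ boundary contributions become precisely the $r=0$ and $r=n$ terms of the target sum. Your approach is more elementary (no distributions, no principal-value integrals) and arguably more transparent; your divided-difference remark is a genuine bonus, since exhibiting the right-hand side as $(-1)^n$ times the $n$-th divided difference of $e^{-\mathfrak{i}t}$ at the nodes $\omega_0,\dots,\omega_n$ via Hermite--Genocchi explains at a glance both the symmetry of the formula and why the apparent poles at coinciding frequencies are removable. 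The only caveat, which applies equally to the paper's partial-fraction argument, is that the closed form should be read for pairwise distinct $\omega_0,\dots,\omega_n$ and extended elsewhere by continuity.
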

\begin{proof}
Alternatively, $s(\bold{x})$ may be represented as:
\begin{align}
    s(\bold{x})=h(\bold{1} \cdot \bold{x})h(1-\bold{1} \cdot \bold{x})\prod_{j=1}^n h(x_j)h(1-x_j)
\end{align}
Therefore, we need to compute Fourier transform of $h(x)h(1-x)$:
\begin{align}
    \frac{1}{\sqrt{2\pi}}\int_{-\infty}^\infty e^{-\mathfrak{i}x\omega} h(x)(1-x) \text{d}x&=\frac{1}{\sqrt{2\pi}}\int_0^1 e^{-\mathfrak{i}x\omega}\mathrm{d}x\\
    &=\frac{-\mathfrak{i}}{\sqrt{2\pi}}\frac{1-e^{-\mathfrak{i}\omega}}{\omega}
\end{align}
By taking the inverse Fourier transform of the function:
\begin{align}
    (\sqrt{2\pi})^{n-1}\int_{-\infty}^\infty\frac{-\mathfrak{i}}{\sqrt{2\pi}}\frac{1-e^{-\mathfrak{i}\zeta}}{\zeta}\ \boldsymbol{\delta}_n(\boldsymbol{\omega}-\zeta\bold{1})\ \mathrm{d}\zeta
\end{align}
where $\boldsymbol{\delta}_n$ is $n$-dimensional Dirac Delta function, it can be shown that it is the Fourier transform of $h(\bold{1} \cdot \bold{x})h(1-\bold{1} \cdot \bold{x})$:
\begin{align}
    (\frac{1}{\sqrt{2\pi}})^n\int\dots\int_{\mathbb{R}^n}e^{i\boldsymbol{\omega} .\bold{x}}&(\sqrt{2\pi})^{n-1}\int_{-\infty}^\infty\frac{-\mathfrak{i}}{\sqrt{2\pi}}\frac{1-e^{-\mathfrak{i}\zeta}}{\zeta}\ \boldsymbol{\delta}_n(\boldsymbol{\omega}-\zeta\bold{1})\ \mathrm{d}\zeta\ \mathrm{d}\boldsymbol{\omega}\\
    &=\frac{1}{\sqrt{2\pi}}\idotsint_{\mathbb{R}^n} e^{i\boldsymbol{\omega} .\bold{x}} \int_{-\infty}^\infty\frac{-\mathfrak{i}}{\sqrt{2\pi}}\frac{1-e^{-\mathfrak{i}\zeta}}{\zeta}\ \boldsymbol{\delta}_n(\boldsymbol{\omega}-\zeta\bold{1})\ \mathrm{d}\zeta\ \mathrm{d}\boldsymbol{\omega}\\
    &=\frac{1}{\sqrt{2\pi}}\int_{-\infty}^\infty\frac{-\mathfrak{i}}{\sqrt{2\pi}}\frac{1-e^{-\mathfrak{i}\zeta}}{\zeta}\idotsint_{\mathbb{R}^n} e^{i\boldsymbol{\omega} .\bold{x}}\boldsymbol{\delta}_n(\boldsymbol{\omega}-\zeta\bold{1})\ \mathrm{d}\boldsymbol{\omega}\ \mathrm{d}\zeta\\
    &=\frac{1}{\sqrt{2\pi}}\int_{-\infty}^\infty\frac{-\mathfrak{i}}{\sqrt{2\pi}}\frac{1-e^{-\mathfrak{i}\zeta}}{\zeta}\ e^{i\zeta\bold{1} .\bold{x}}\ \mathrm{d}\zeta\\
    &=h(\bold{1} \cdot \bold{x})h(1-\bold{1} \cdot \bold{x})
\end{align}
Now we can find the Fourier transform of $s(\bold{x})$
\begin{align}
     S(\boldsymbol{\omega})&=\big(\prod_{r=1}^n\frac{-\mathfrak{i}}{\sqrt{2\pi}}\frac{1-e^{-\mathfrak{i}\omega_r}}{\omega_r}\big)*(\sqrt{2\pi})^{n-1}\int_{-\infty}^\infty\frac{-\mathfrak{i}}{\sqrt{2\pi}}\frac{1-e^{-\mathfrak{i}\zeta}}{\zeta}\ \boldsymbol{\delta}_n(\boldsymbol{\omega}-\zeta\bold{1})\ \mathrm{d}\zeta\\
     \label{ftnorli-app}
     &=\mathfrak{i}(\frac{-\mathfrak{i}}{\sqrt{2\pi}})^{n+2}\int_{-\infty}^\infty e^{-\mathfrak{i}\zeta}\prod_{r=0}^n\frac{1-e^{-\mathfrak{i}(\omega_r-\zeta)}}{\omega_r-\zeta}\ \mathrm{d}\zeta
\end{align}
where $*$ is convolution operator. The final integrand may be represented as:
\begin{align}
    e^{-\mathfrak{i}\zeta}\prod_{r=0}^n\frac{1-e^{-\mathfrak{i}(\omega_r-\zeta)}}{\omega_r-\zeta}&=e^{-\mathfrak{i}\zeta}\prod_{r=0}^n\frac{1}{\omega_r-\zeta}\prod_{r=0}^n (1-e^{-\mathfrak{i}(\omega_r-\zeta)})\\
    &=e^{-\mathfrak{i}\zeta}\sum_{r=0}^n\frac{A_r}{\omega_r-\zeta}\prod_{r=0}^n (1-e^{-\mathfrak{i}(\omega_r-\zeta)})\\
     &=e^{-\mathfrak{i}\zeta}\sum_{r=0}^n\frac{A_r}{\omega_r-\zeta}\sum_{B\subseteq \Omega}(-1)^{|B|}e^{-\mathfrak{i}(\sigma_B-|B|\zeta)}\\
     &=\sum_{r=0}^n\frac{A_r}{\omega_r-\zeta}\sum_{B\subseteq \Omega}(-1)^{|B|}e^{-\mathfrak{i}(\sigma_B-(|B|-1)\zeta)}
\end{align}
where $\Omega=\{\omega_0,...,\omega_n\}$, $\sigma_B$ is the summation over elements of $B$ and $A_r=\prod\limits_{r'\neq r}\frac{1}{\omega_{r'}-\omega_r}$. Therefore:
\begin{align}
    \int_{-\infty}^\infty e^{-\mathfrak{i}\zeta}\prod_{r=0}^n&\frac{1-e^{-\mathfrak{i}(\omega_r-\zeta)}}{\omega_r-\zeta}\ \mathrm{d}\zeta\\
    &=\int_{-\infty}^\infty\sum_{r=0}^n\frac{A_r}{\omega_r-\zeta}\sum_{B\subseteq S}(-1)^{|B|}e^{-\mathfrak{i}(\sigma_B-(|B|-1)\zeta)}\ \mathrm{d}\zeta\\
    &=\sum_{r=0}^n A_r \int_{-\infty}^\infty\frac{1}{\omega_r-\zeta}\sum_{B\subseteq S}(-1)^{|B|}e^{-\mathfrak{i}(\sigma_B-(|B|-1)\zeta)}\ \mathrm{d}\zeta\\
    &=\sum_{r=0}^n A_r \int_{-\infty}^\infty\frac{1}{\zeta}\sum_{B\subseteq S}(-1)^{|B|+1}e^{-\mathfrak{i}(\sigma_B-(|B|-1)\omega_r+(|B|-1)\zeta)}\ \mathrm{d}\zeta\\
    &=\sum_{r=0}^n A_r \sum_{B\subseteq S}(-1)^{|B|}\mathfrak{i}\pi\  \mbox{sign}(|B|-1)e^{-\mathfrak{i}(\sigma_B-(|B|-1)\omega_r)}
\end{align}
If $B$ does not contain $\omega_r$ and have at least 2 elements then the terms for $B$ and $B\cup\{\omega_r\}$ will cancel each other out. Also, $\mbox{sign}(|B|-1)$ will vanish if $B$ has only one element. Therefore, there only remains empty set and sets with two elements one of them being $\omega_r$. Given the fact that $\sum A_r=0$, the result of the integral will be:
\begin{align}
    \int_{-\infty}^\infty e^{-\mathfrak{i}\zeta}\prod_{r=0}^n\frac{1-e^{-\mathfrak{i}(\omega_r-\zeta)}}{\omega_r-\zeta}\ \mathrm{d}\zeta&=\mathfrak{i}\pi\sum_{r=0}^nA_r(-e^{-\mathfrak{i}\omega_r}+\sum_{r'\neq r}e^{-\mathfrak{i}\omega_{r'}})\\
    &=-2\mathfrak{i}\pi\sum_{r=0}^n A_r e^{-\mathfrak{i}\omega_r}
    \label{intre-app}
\end{align}
Finally, substituting \ref{intre-app} into \ref{ftnorli-app} yields to the desired result.
\end{proof}

\begin{theorem}
\label{ft-dnn-app}
The Fourier transform of the output of any hidden node in a fully-connected ReLU neural network may be represented as $\sum_{q=1}^Q \bold{w}_q\bold{A}^{-1}_q  \nabla S(\boldsymbol{\omega}\bold{A}^{-1}_q)$,  where $\nabla$ denote the differential operator.
\end{theorem}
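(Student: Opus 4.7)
The plan is to exploit the decomposition $f(\mathbf{x}) = \sum_{q=1}^Q g_q(\mathbf{x})$ already developed in this section, where each summand has the standard-simplex form $g_q(\mathbf{x}) = \bigl(\bar{\mathbf{w}}_q \cdot \bar{\mathbf{x}}_q\bigr)\, s(\bar{\mathbf{x}}_q)$ with $\bar{\mathbf{x}}_q = \mathbf{x}\mathbf{A}^T_q$, $\bar{\mathbf{w}}_q = \mathbf{w}_q \mathbf{A}^{-1}_q$, and $s$ as in Lemma \ref{gn}. By linearity of the Fourier transform it then suffices to compute the Fourier transform $G_q(\boldsymbol{\omega})$ of a single $g_q$ and sum over $q$.

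First I would compute the Fourier transform of the intermediate function $\tilde{g}_q(\mathbf{y}) = \bigl(\bar{\mathbf{w}}_q \cdot \mathbf{y}\bigr)\, s(\mathbf{y})$ in the ambient $\mathbf{y}$-coordinates. The core identity is the standard fact that multiplication by the coordinate $y_j$ in the spatial domain corresponds (up to a fixed unimodular constant) to differentiation by $\omega_j$ in the frequency domain. Applied component-wise, this converts multiplication by the linear form $\bar{\mathbf{w}}_q \cdot \mathbf{y}$ into the directional derivative operator $\bar{\mathbf{w}}_q \cdot \nabla$ acting on $S(\boldsymbol{\omega})$, the Fourier transform of $s$ supplied by Lemma \ref{gn}. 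Hence $\tilde{G}_q(\boldsymbol{\omega})$ is proportional to $\bar{\mathbf{w}}_q \cdot \nabla S(\boldsymbol{\omega})$.

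Next I would pull this result back through the linear change of variables $\mathbf{y} = \mathbf{x}\mathbf{A}^T_q$. The standard Fourier rule for invertible linear substitutions yields $\mathcal{F}[\tilde{g}_q(\mathbf{x}\mathbf{A}^T_q)](\boldsymbol{\omega}) = |\det \mathbf{A}_q|^{-1}\, \tilde{G}_q(\boldsymbol{\omega}\mathbf{A}^{-1}_q)$, so after substituting $\bar{\mathbf{w}}_q = \mathbf{w}_q \mathbf{A}^{-1}_q$ I obtain $G_q(\boldsymbol{\omega}) \propto \mathbf{w}_q \mathbf{A}^{-1}_q \cdot (\nabla S)(\boldsymbol{\omega}\mathbf{A}^{-1}_q)$. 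Summing over $q$ produces the claimed expression.

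The main obstacle is bookkeeping rather than any deep step: the Fourier convention fixed in Lemma \ref{gn} carries a factor $(1/\sqrt{2\pi})^n$ and the differentiation step contributes powers of $\mathfrak{i}$, while the change of variables supplies $|\det \mathbf{A}_q|^{-1}$. The compact formula in the statement evidently absorbs these constants into the symbol $\nabla$, so one must verify that the constants cancel consistently across all $q$. A minor secondary issue is justifying the interchange of frequency differentiation with the linear substitution on $\boldsymbol{\omega}$; this is harmless because $S(\boldsymbol{\omega})$ is smooth away from the resonance hyperplanes $\omega_i = \omega_j$, on which the expression of Lemma \ref{gn} is understood via the usual removable-singularity (limiting) interpretation.
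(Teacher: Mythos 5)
Your proposal follows essentially the same route as the paper's own proof: decompose $f=\sum_q g_q$ with $g_q(\mathbf{x})=(\bar{\mathbf{w}}_q\cdot\bar{\mathbf{x}}_q)\,s(\bar{\mathbf{x}}_q)$, apply the multiplication-by-coordinate/differentiation duality to obtain $\bar{\mathbf{w}}_q\cdot\nabla S$, and pull back through the linear substitution $\bar{\boldsymbol{\omega}}_q=\boldsymbol{\omega}\mathbf{A}^{-1}_q$. Your bookkeeping worry is in fact sharper than the paper's two-line argument, which silently drops both the $|\det\mathbf{A}_q|^{-1}$ Jacobian factor and the powers of $\mathfrak{i}$ from the differentiation step; these do not cancel, so the displayed formula should be understood as holding only up to such constants.
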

\begin{proof}
As discussed in the original paper, $f(\mathbf{x}) =\sum_{q=1}^Q g_q(\mathbf{x})$ where:
\begin{align}
    g_q(\mathbf{x})=\left\{\begin{array}{ll}
        \bar{\mathbf{w}}_q \cdot \bar{\mathbf{x}}_q\ h(1-\mathbf{1} \cdot \bar{\mathbf{x}}_q) & \bar{\mathbf{x}}_q\in\mathfrak{R}^+_{\mathbf{V}_*} \\
        0 & \mbox{otherwise}
    \end{array}
    \right.
\end{align}
or equivalently:
\begin{align}
    g_q(\bold{x})=\bar{\mathbf{w}}_q \cdot \bar{\mathbf{x}}_q s(\bar{\bold{x}}_q)
\end{align}
Therefore:
\begin{align}
    F(\boldsymbol{\omega})&=\sum_{q=1}^Q G_q(\boldsymbol{\omega})\\
    &=\sum_{q=1}^Q \bar{\mathbf{w}}_q.\nabla S(\bar{\boldsymbol{\omega}}_q)
\end{align}
where $\bar{\boldsymbol{\omega}}_q=\boldsymbol{\omega}\bold{A}^{-1}_q$.
\end{proof}

\subsection*{Derivation of eq.(\ref{ftpave})}

As for the Fourier transform computed in section \ref{pave}, it should be mentioned that the integral in equation \ref{ftpave} is the Fourier transform of:
\begin{align}
    h_r(\mathbf{x})=h(r-|\mathbf{x}|)
\end{align}
which can be derived utilizing the property of the Fourier transforms for radially symmetric functions~\citep{stein1971introduction}:
\begin{align}
    H_r(\boldsymbol{\omega})&=|\boldsymbol{\omega}|^{-\frac{n-2}{2}}\int_0^\infty J_\frac{n-2}{2}(|\boldsymbol{\omega}|\rho)\rho^\frac{n-2}{2}h(r-\rho)\rho\ \text{d}\rho\\
    &=|\boldsymbol{\omega}|^{-\frac{n-2}{2}}\int_0^r J_\frac{n-2}{2}(|\boldsymbol{\omega}|\rho)\rho^\frac{n}{2}\ \text{d}\rho\\
    &=(\frac{r}{|\boldsymbol{\omega}|})^\frac{n}{2} J_\frac{n}{2}(r|\boldsymbol{\omega}|)
\end{align}
Given this transform:
\begin{align}
    F_C(\boldsymbol{\omega}) &=F(\boldsymbol{\omega})\frac{1}{\mathbf{V}_{\!C}}\idotsint_{\mathbf{x}'\in C} e^{-\mathfrak{i} 
    \mathbf{x}' \cdot \boldsymbol{\omega}}\ \text{d}\mathbf{x'}\\
    &=F(\boldsymbol{\omega})\frac{\Gamma(\frac{n}{2}+1)}{\pi^{\frac{n}{2}}}\frac{J_{\frac{n}{2}}(r|\boldsymbol{\omega}|)}{(r|\boldsymbol{\omega}|)^{\frac{n}{2}}}
\end{align}

\end{document}